\newtheorem{theorem}{Theorem}
\newproof{proof}{Proof}
\begin{document}

\begin{frontmatter}

\dochead{}
\title{
\begin{flushleft}
{\bf \LARGE Estimation and Tracking of AP-diameter of the Inferior Vena Cava in Ultrasound Images Using a Novel Active Circle Algorithm}
\end{flushleft}
}

\author[]{\bf \Large \leftline {Ebrahim Karami$^{*a}$, Mohamed Shehata$^a$, Andrew Smith$^b$}}

\address{\bf  \leftline {$^a$Department of Engineering and Applied Sciences, Memorial University, Canada, }

\bf  \leftline {$^b$Faculty of Medicine, Memorial University, Canada}
}

\cortext[]{Ebrahim karami is responsible for all correspondance (email:ekarami@mun.ca).}

\begin{abstract}
Medical research suggests that the anterior-posterior (AP)-diameter of the inferior vena cava (IVC) and its associated temporal variation as imaged by bedside ultrasound is useful in guiding fluid resuscitation of the critically-ill patient. Unfortunately, indistinct edges and gaps in vessel walls are frequently present which impede accurate estimation of the IVC AP-diameter for both human operators and segmentation algorithms.\\
The majority of research involving use of the IVC to guide fluid resuscitation involves manual measurement of the maximum and minimum AP-diameter as it varies over time. This effort proposes using a time-varying circle fitted inside the typically ellipsoid IVC as an efficient, consistent and novel approach to tracking and approximating the AP-diameter even in the context of poor image quality. In this active-circle algorithm, a novel evolution functional is proposed and shown to be a useful tool for ultrasound image processing. The proposed algorithm is compared with an expert manual measurement, and state-of-the-art relevant algorithms.  It is shown that the algorithm outperforms other techniques and performs very close to manual measurement.
\end{abstract}

\begin{keyword}
Inferior vena cava (IVC), ultrasound imaging, image segmentation, active circle, evolution functional, volume status, fluid responsiveness, resuscitation.
\end{keyword}

\end{frontmatter}

\section{Introduction}
Trauma patients suffering from hemorrhagic shock as a result of blood loss or patients with shortness of breath from volume overload in the setting of congestive heart failure frequently require immediate resuscitation. Fast and accurate assessment of circulating blood volume in critically-ill patients is a challenging task as excessive or insufficient fluid administration increases patient morbidity and mortality\cite{smyrniotis2004,rivers2001}. Clinical research has shown that the variations in the anterior-posterior (AP) diameter of inferior vena cava (IVC) can be helpful in approximating a patient's volume status and whether or not they will benefit from additional intravenous fluids \cite{charron2006, durairaj2008, barbier2004}. Traditionally, the AP-diameter is manually estimated from portable ultrasound imagery - often a challenging and time-consuming task in the setting of poor image quality. Artifacts such as shadowing and speckle noise frequently result in  indistinct edges and gaps in the vessel walls reducing accurate estimation\cite{wang2014multiscale, sudha2009speckle}.\\
Speckle noise present in ultrasound imagery is traditionally considered to be a Rayleigh distributed multiplicative noise \cite{wagner1983statistics}. Hence, Rayleigh mixture models have been proposed as a potential solution for ultrasound image segmentation \cite{seabra2011rayleigh, pereyra2012segmentation}. However, it has been shown that due to the scattering population and signal processing, the speckle distribution deviates from Rayleigh\cite{tuthill1988deviations}. Furthermore, lossy compression algorithms present on many portable ultrasound machines further deviate the recorded clip from an idealized Rayleigh distribution.
\\
Active contours (ACs), as planar deformable models, are widely used for segmentation of ultrasound images \cite{kass1988, karami2016, liu2010probability, talebi2011medical, noble2010ultrasound}. ACs address image segmentation through minimization of energy functional(s) with their performance frequently dependent on a manually-defined initialization contour. In order to avoid local minima, the initiating contour needs to be as close as possible to the actual contour. ACs can be combined with other segmentation algorithms as a coarse-to-fine strategy to reduce the impact of the initial contour on segmentation error \cite{yim2003, ali2012}. Researchers have addressed the challenge of IVC segmentation using this strategy by using template matching method as the coarse segmentation and AC as the fine-tuning (TMAC) \cite{nakamura2013}. Unfortunately, this approach fails when the IVC undergoes large frame-to-frame variations commonly present on portable machines with lower frame rates (e.g. 30 frames-per-second). Additionally, ACs continue to perform poorly in the context of fuzzy or unclear boundaries as is commonly the case for the IVC.
\\
Given that the cross-section of the IVC is largely convex, the IVC contour can be represented in polar coordinates and consequently, polar active contours appear as a promising solution for IVC segmentation \cite{baust2012}. In  \cite{karami2017a}, a polar AC model based on the third centralized moment (M3) was proposed for segmentation of IVC images. Unfortunately,  M3 algorithms roughly estimates the cross-sectional area (CSA) of the IVC and fails with poor quality images.\\
Clinically the CSA of the IVC is an optimal approach to accurately assess a patient's volume status, but all existing approaches fail to accurately estimate the CSA. Hence, clinicians instead of the the whole CSA of the IVC, measure its AP-diameter. In this paper, we propose using an active circle algorithm incorporating a novel evolution functional to estimate the AP-diameter of the IVC across a spectrum of image qualities. In addition, the proposed evolution functional appears promising for other ultrasound image segmentation problems as well.
\\
The remainder of this paper is organized as follows - Section II discusses the background and related work. The proposed active circle algorithm is presented in Section III while experimental results are in Section IV and concluding remarks in Section V.
\section{Background and Related work}
\subsection{IVC Image}
Fig. \ref{fig:image} displays a typical ultrasound image of the IVC, with a circle fitted on the IVC AP-diameter shown in yellow color. For demonstrating the circle model which is later used in this work, we have also displayed two scaled circles, concentric with the yellow circle with scaling factors $S=0.75$ and $S=1.5$ shown in red and green colors, respectively. As in Fig. \ref{fig:image}, one can see that the IVC boundaries are generally fuzzy and are unclear. On the other hand, one can see that the inside of IVC is generally hypoechoic and the outside is hyperechoic indicating that the inside and outside of the IVC have distributions with different means. Fig. \ref{fig:histos} illustrates the probability density functions (PDFs) of the intensity levels inside the three circles shown in Fig. \ref{fig:image}, where the x-axis is the normalized pixel intensity which is between 0 and 1. From Fig. \ref{fig:histos}-(a) and (b), one can see that the intensity distribution inside the IVC contour is rather sparse than continuous. Assume that the intensity levels inside and outside the IVC have PDFs $F_{in}$ with mean $m_{in}$ and $F_{out}$ with $m_{out}$, respectively. It is obvious that $m_{in} < m_{out}$. This indicates that regardless of the PDFs $F_{in}$ and $F_{out}$, their distinct means can be used for image segmentation.
\begin{figure}[t!]
\centering
\includegraphics[width=0.5\linewidth]{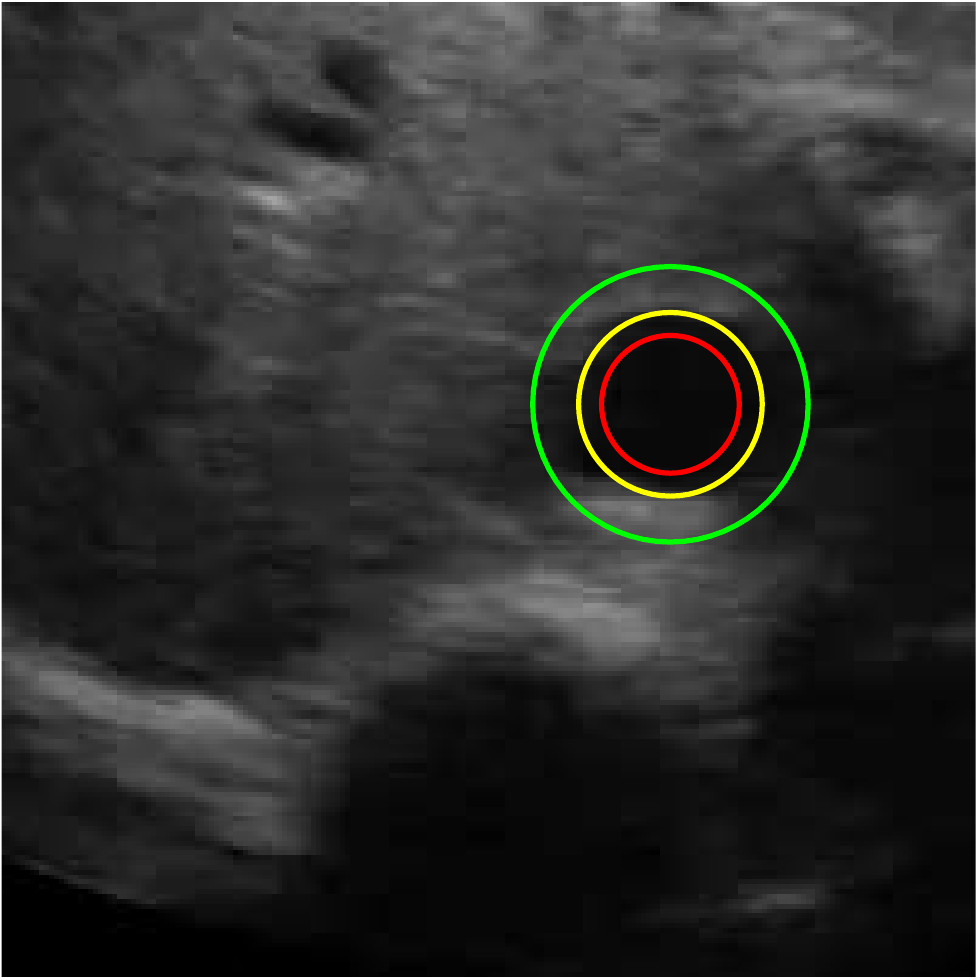}
\vspace{0.2in}
\caption{A typical ultrasound image of IVC with circle evolution model.}\label{fig:image}
\end{figure}
\begin{figure}[t!]
\centering
\includegraphics[width=1\linewidth]{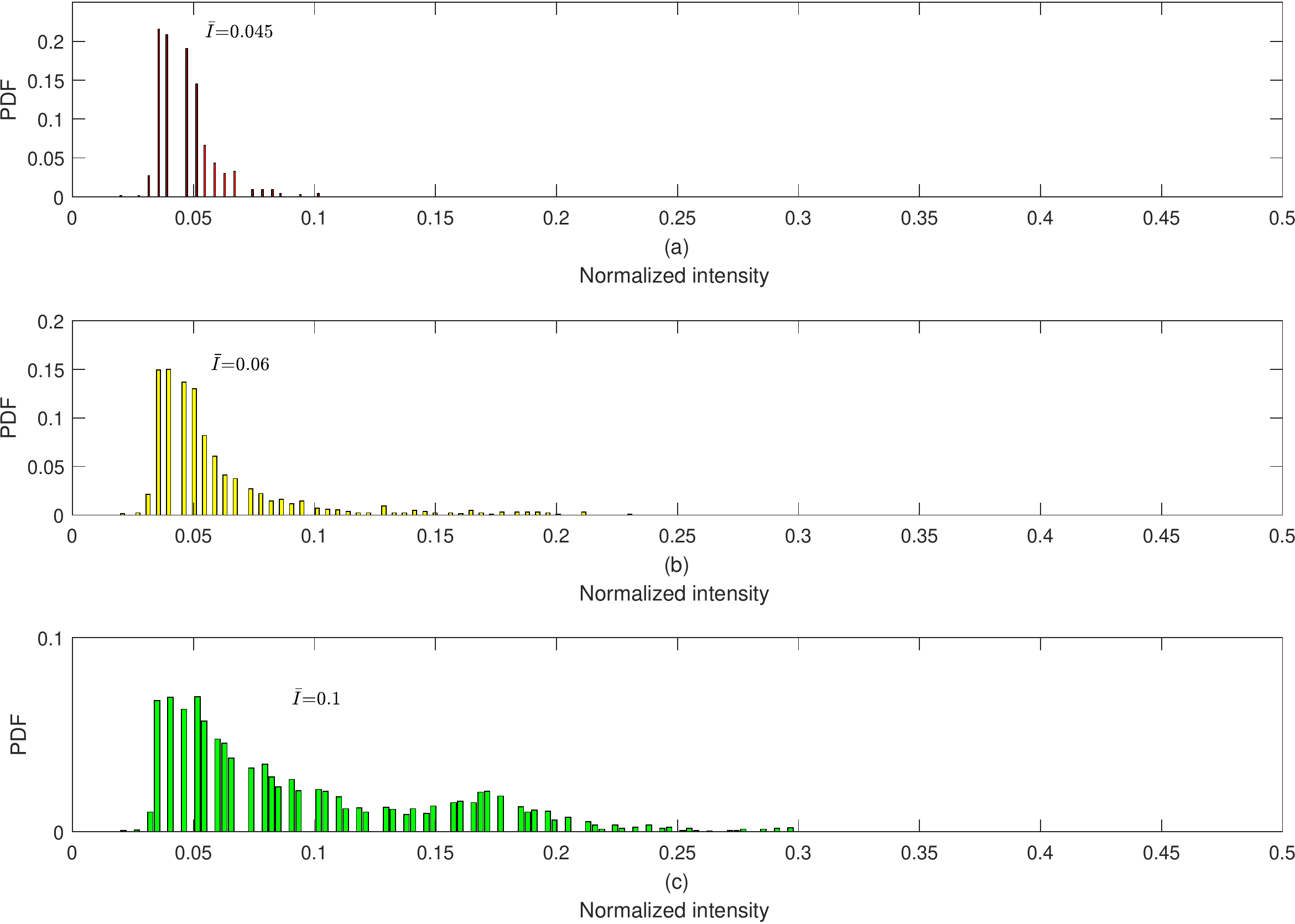}
\caption{PDF of the intensity levels inside the disks concentric with the IVC (see Fig. \ref{fig:image}) with radius equal to (a): 75\%, (b): 100\%, and (c): 150\% of the IVC AP-diameter.}
\label{fig:histos}

\end{figure}

\subsection{Energy and Evolution Functionals}
In traditional level set methods for a given energy functional $C$, the evolution functional is obtained as \cite{li2005level,cremers2006kernel}:
\begin{equation}
    \frac{\partial C}{\partial t} =-F(|\nabla C|),\label{eq:level}
\end{equation}
\noindent where the function $F$ depends on the image and is usually defined as a linear function. In this paper, we use the following linear function \cite{li2005level,cremers2006kernel}
\begin{equation}
    E =-|\nabla C| \vec{n},\label{eq:level2}
\end{equation}
\noindent where $\vec{n}$ is the vector normal to the contour.
\\
Two conventional functionals, widely employed in variational ACs, are based on the mean and the variance of the intensities \cite{yezzi2003variational}.
\\
\noindent \textit{Functional Based on Mean:} Assuming $u$ and $v$ represent the mean intensity levels inside and outside the IVC, respectively, the energy functional is defined as \cite{yezzi2003variational} 
\begin{equation}
    C_{mean}=\frac{\alpha}{2}(u-v)^2,\label{eq:mean}
\end{equation}
\noindent where $\alpha$ is a positive weighting factor. Using (\ref{eq:level}), the evolution functional corresponding to (\ref{eq:mean}) is obtained as \cite{yezzi2003variational}
\begin{equation}
E_{mean}=\alpha(u-v)(\frac{I-u}{A_u}+\frac{I-v}{A_v})\vec{n},\label{evol:mean}
\end{equation}
where $I$ is the intensity at the contour point and $A_u$ and $A_v$ are the areas inside and outside the contour, respectively.
\\
\noindent \textit{Functional Based on Variance:} Assuming $\sigma_u^2$ and $\sigma_v^2$ as the variances of intensity levels inside and outside the IVC respectively, the energy functional is defined as \cite{yezzi2003variational}
\begin{equation}
    C_{var}=\sigma_u^2+\sigma_v^2,\label{eq:var}
\end{equation}
Using (\ref{eq:level}), the evolution functional corresponding to (\ref{eq:var}) is obtained as
\begin{equation}
E_{var}=\alpha(\frac{I^2-u^2-\sigma_u^2}{A_u}-\frac{I^2-v^2-\sigma_v^2}{A_v})\vec{n},\label{evol:var}
\end{equation}
\\
where $u$ and $\sigma_u^2$ are the mean and variance of the intensities for the pixels inside the contour while $v$ and $\sigma_v^2$ represent for the ones outside the contour.
\vspace{-0.2cm}
\section{Proposed Algorithm}
\subsection{Why a Circular Model?}
IVC images can be segmented with polar ACs as in \cite{karami2017a}. When a polar AC with $N$ contour points is used, the number of parameters that has to be estimated is $N$, i.e., one radial distance for each contour point, while with traditional Cartesian ACs, $2N$ parameters have to be estimated, i.e., two for $x$ and $y$ coordinates of each contour points. This makes polar contour models less complex and more accurate for vessel segmentation. On the other hand, to estimate the AP-diameter of the IVC, it is not necessary to fully segment the IVC contour; and hence, we can exploit a reduced model such as a circle which only has three parameters and, consequently, can be estimated more precisely. Fig. \ref{fig:test} shows four sample ultrasound images from IVCs with different shapes. For each case, the IVC contour is highlighted with yellow colors and its corresponding AP-diameter is shown with green colors. From Fig. \ref{fig:test}, one can see that regardless of the shape of the IVC, a circle fitted inside it can accurately model and estimate the IVC AP-diameter. 
\begin{figure}
\centering
\includegraphics[width=0.8\linewidth]{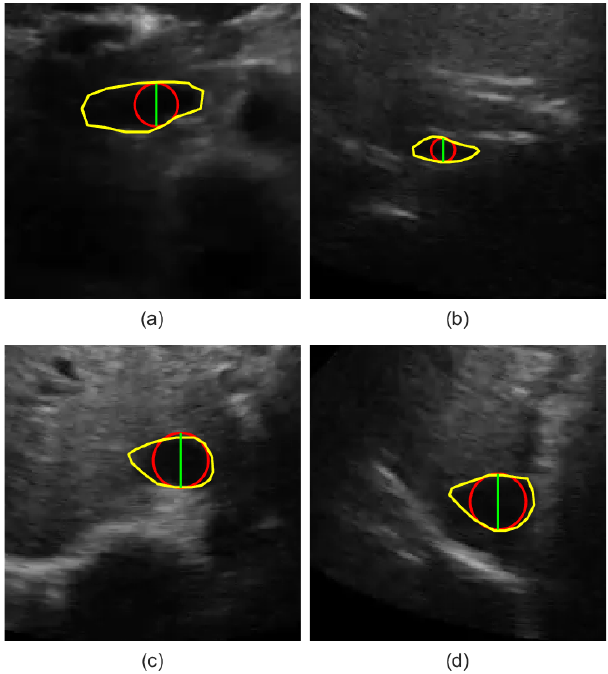}
\vspace{5mm}
\caption{Estimation of AP-diameter using circle fitting in IVC images with different shapes and qualities.}
\label{fig:test}
\end{figure}
\subsection{Proposed Evolution Functional}
After exploring a variety of energy and evolution functionals, we heuristically found the following evolution functional to be useful for segmentation of ultrasound images.
\begin{equation}
E=\alpha(u-v)(2I-u-v).\label{eq:evol}
\end{equation}
With the model defined in Section II.A, it is easy to see that if the contour is entirely inside the IVC, then $\bar{I}=u=m_{in}$, with $\bar{I}$ being the average intensity for the points on the contour. Consequently, the evolution functional for contour points is a random variable with the mean value
\begin{equation}
    \bar{E}=(u-v)(2\bar{I} -u-v)=(u-v)^2,
\end{equation}
\noindent resulting in the contour consistently expanding toward the actual IVC boundary. On the contrary, if the contour is entirely outside the IVC, then $\bar{I}=v=m_{out}$ and the evolution functional for contour points is a random variable with the mean value
\begin{equation}
    \bar{E}=(u-v)(2\bar{I} -u-v)=-(u-v)^2,
\end{equation}
\noindent resulting in the contour consistently shrinking toward the actual IVC boundary.\\
To support this finding, we computed the sensitivity of the proposed evolution functional to translation and scaling of the fitted circle. \\
Fig. \ref{fig:evol} shows this results with the y-axis being the average contour evolution for the points on fitted circle with $\alpha=10^{-3}$. Fig. \ref{fig:evol}-(a) shows this averaged functional when the circle diameter equals to AP-diameter but its center is shifted by $\delta_x$ along the x-axis. From \ref{fig:evol}-(a), one can see that when the fitted circle is shifted toward the left, i.e, with $\delta x < 0$, the average evolutional functional is positive, moving the fitted circle to the right, and vice versa. The circle evolution only stops when the average evolutional functional is zero which occurs at $\delta x=0$.  One can see a similar result for the transverse axis in Fig. \ref{fig:evol}-(b). This proves that with the proposed evolutional functional, the algorithm reaches equilibrium when the circle centers on the IVC. Fig. \ref{fig:evol}-(c) shows this result versus the circle diameter, i.e., functional averaged over the points on the circle with diameter $D$ and concentric with the IVC center. From Fig. \ref{fig:evol}-(c), one can see that when the circle diameter is less than $D_{AP}$ of the IVC, the evolutional functional is positive demonstrating that the contour is expanding toward the actual IVC boundaries. This proves that with the proposed functional, the algorithm reaches its equilibrium if $D=D_{AP}$, i.e., when the diameter of the circle equals to the actual AP-diameter of the vessel.
\begin{figure}[t!]
\centering
\includegraphics[width=1\linewidth]{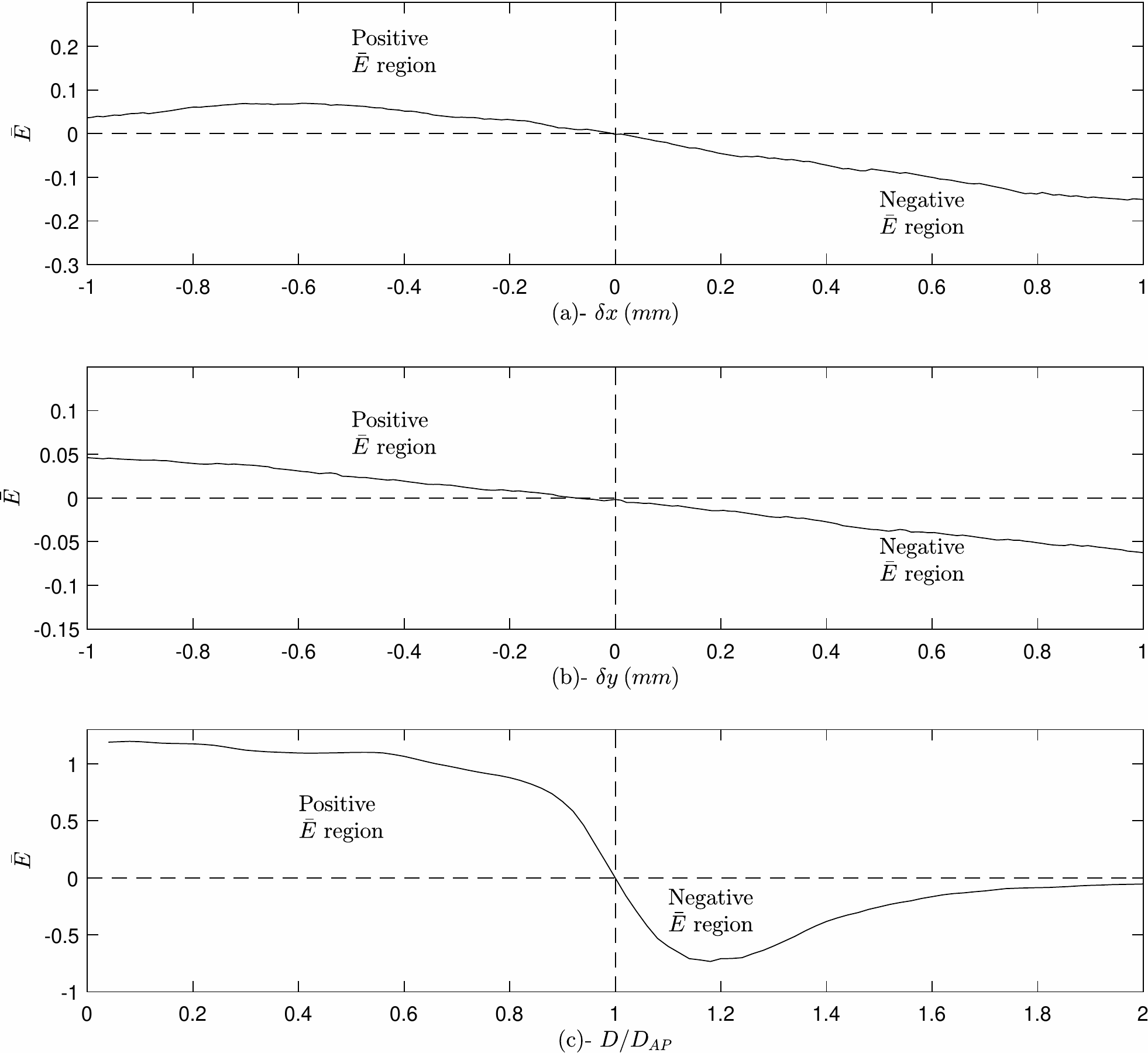}
\vspace{0.25in}
\caption{Proposed evolution functional versus (a)- $\delta x$, (b)- $\delta y$, and (c)- normalized circle diameter $D/D_{AP}$ with $\alpha=10^{-3}$. }
\label{fig:evol}
\end{figure}
\subsection{Circle Evolution}
The evolution functional is utilized to update the parameters of the circle with $R$ and $(x_c,y_c)$ as its radius and center, respectively.  Initially, the circle is sampled at $K$ points with polar angles $\theta_k=\frac{2k\pi}{N}$, $k=0,1,...,K-1$, where the normal vector and Cartesian coordinates corresponding to the $k$th sampled point notated as 
\begin{equation}
\vec{n}_k=[\cos(\theta_k), \sin(\theta_k)]^T, 
\end{equation}
\noindent and 
\begin{equation}
[x_k, y_k]^T=[x_c, y_c]^T+R\vec{n}_k,
\end{equation}
\noindent respectively. The evolution functional generates forces $f_k=\alpha(u-v)(2I_k-u-v)$ along the normal vectors $n_k, k=0,1,...,K-1$, as shown in Fig. \ref{fig:forces}.
\begin{figure}[t!]
\centering
\includegraphics[width=0.9\linewidth]{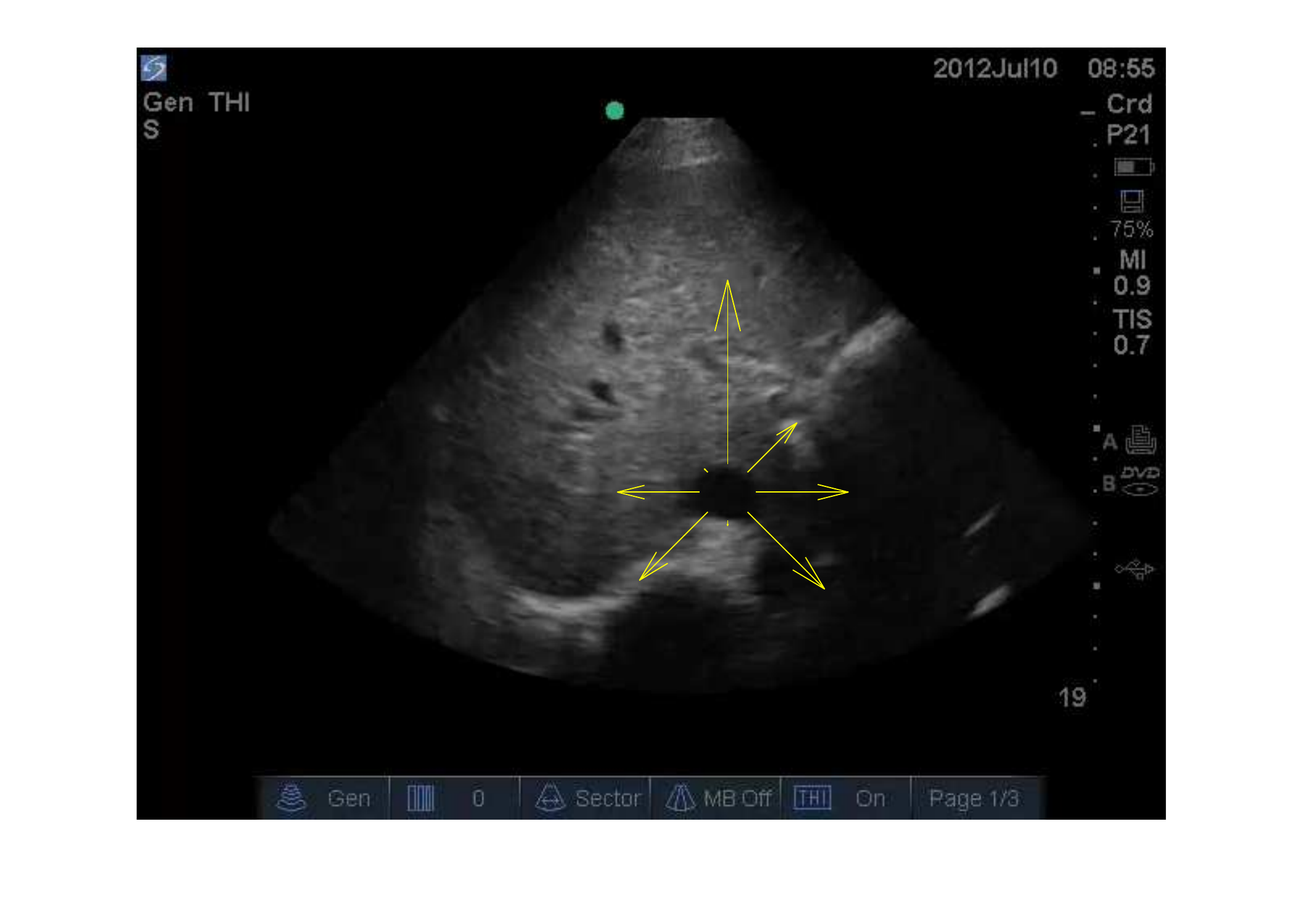}
\caption{The forces generated by the evolution functional in (\ref{eq:evol}) with $\alpha=0.05$ and $K=8$ (values are adjusted for illustration purposes).}
\label{fig:forces}
\vspace{-0.3cm}
\end{figure}
The forces shift the sampled contour points to new positions governed by
\begin{equation}
[\tilde{x}_k, \tilde{x}_k]=[x_c, y_c]^T+(R+f_k)\vec{n}_k.\label{shifted}
\end{equation}
\noindent where $f_k$ is the value of the evolution functional at $k$th contour point. Obviously, the shifted contour points are not on a circle anymore; hence, a new circle needs to be fitted to the updated contour points. This is accomplished through minimization of the following energy functional:
\begin{equation}
C_{circle}=\sum\limits_{k=0}^{K-1}[(\tilde{x}_k-\tilde{x}_c-\tilde{R}\cos(\theta_k))^2+(\tilde{y}_k-\tilde{y}_c-\tilde{R}\sin(\theta_k))^2],\label{eq:newdisk}
\end{equation}
where $[\tilde{x}_c, \tilde{y}_c]$ and $\tilde{R}_c$ are the center and radius of the updated circle. By minimizing $C_{circle}$, the new values for the center and radius of the circle are obtained using the following theorems.
\begin{theorem}
For a given set of forces, the circle center is shifted by the average of the force vectors $f_k\vec{n}_k$.
\end{theorem}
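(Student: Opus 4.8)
The plan is to treat $C_{circle}$ in (\ref{eq:newdisk}) as a smooth function of the three unknowns $\tilde{x}_c$, $\tilde{y}_c$, $\tilde{R}$ and to locate its minimizer by setting the gradient to zero. Since the claim concerns only the center, I would first write down the stationarity conditions with respect to $\tilde{x}_c$ and $\tilde{y}_c$, which after differentiating and dividing by $-2$ read
\begin{equation}
\sum_{k=0}^{K-1}\left(\tilde{x}_k-\tilde{x}_c-\tilde{R}\cos\theta_k\right)=0,\qquad \sum_{k=0}^{K-1}\left(\tilde{y}_k-\tilde{y}_c-\tilde{R}\sin\theta_k\right)=0.
\end{equation}

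The decisive observation is that the sampling angles are equally spaced, so the normals $\{\vec{n}_k\}$ are the $K$ roots of unity placed on the unit circle and their coordinates sum to zero:
\begin{equation}
\sum_{k=0}^{K-1}\cos\theta_k=0,\qquad \sum_{k=0}^{K-1}\sin\theta_k=0.
\end{equation}
This identity is what makes the center equations decouple from the unknown radius, since the terms $\tilde{R}\sum_k\cos\theta_k$ and $\tilde{R}\sum_k\sin\theta_k$ vanish, leaving $\tilde{x}_c=\tfrac{1}{K}\sum_k\tilde{x}_k$ and $\tilde{y}_c=\tfrac{1}{K}\sum_k\tilde{y}_k$. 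In other words, the fitted center is simply the centroid of the shifted points, independent of the fitted radius $\tilde{R}$.

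I would then substitute the explicit form of the shifted points from (\ref{shifted}), namely $\tilde{x}_k=x_c+(R+f_k)\cos\theta_k$ and $\tilde{y}_k=y_c+(R+f_k)\sin\theta_k$. Summing over $k$ and invoking the vanishing trigonometric sums once more to kill the bare $R\cos\theta_k$ and $R\sin\theta_k$ contributions yields
\begin{equation}
\tilde{x}_c=x_c+\frac{1}{K}\sum_{k=0}^{K-1}f_k\cos\theta_k,\qquad \tilde{y}_c=y_c+\frac{1}{K}\sum_{k=0}^{K-1}f_k\sin\theta_k,
\end{equation}
which assembles into the vector statement $[\tilde{x}_c,\tilde{y}_c]^T=[x_c,y_c]^T+\tfrac{1}{K}\sum_k f_k\vec{n}_k$, exactly the claimed displacement by the average of the force vectors $f_k\vec{n}_k$.

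The computation is routine once the normal equations are set up; the only real subtlety, which I would state explicitly as a hypothesis, is the equal spacing of the samples, because it is precisely the cancellation of $\sum_k\cos\theta_k$ and $\sum_k\sin\theta_k$ that simultaneously separates the center from the radius and erases the symmetric part of the original circle, isolating the force-induced shift. I note that for this cancellation to hold over the $K$ sampled points the angles should be read as $\theta_k=2k\pi/K$ rather than $2k\pi/N$ as printed in the surrounding text.
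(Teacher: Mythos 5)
Your proof is correct and follows essentially the same route as the paper: minimize $C_{circle}$ over the center coordinates, invoke $\sum_{k}\cos\theta_k=\sum_{k}\sin\theta_k=0$ to decouple the center equations from $\tilde{R}$, and substitute the shifted points from (\ref{shifted}) to isolate the force-induced displacement $\frac{1}{K}\sum_k f_k\vec{n}_k$. Your closing remark that the sampling angles must be $\theta_k=2k\pi/K$ rather than the printed $2k\pi/N$ for the trigonometric sums to vanish is a correct catch of a typo in the surrounding text.
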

\begin{proof}\let\qed\relax
To find $\tilde{x}_c$ and $\tilde{y}_c$, the energy functional ($C_{circle}$) in (\ref{eq:newdisk}) is minimized by setting its gradient to zero as:
\begin{equation}
\frac{\partial C_{circle}}{\partial \tilde{x}_c}=-2\sum\limits_{k=0}^{K-1}(\tilde{x}_k-\tilde{x}_c-\tilde{R}\cos(\theta_k))=0,\label{eq:forcesx}
\end{equation}
\begin{equation}
\frac{\partial C_{circle}}{\partial \tilde{y}_c}=-2\sum\limits_{k=0}^{K-1}(\tilde{y}_k-\tilde{y}_c-\tilde{R}\sin(\theta_k))=0.\label{eq:forcesy}
\end{equation}
Since $\sum\limits_{k=0}^{K-1}\cos(\theta_k)=\sum\limits_{k=0}^{K-1}\sin(\theta_k)=0$, by subsisting (\ref{eq:newdisk}) in (\ref{eq:forcesx}) and (\ref{eq:forcesy}), one can easily find
\begin{equation}
[\tilde{x}_c, \tilde{y}_c]=[x_c, y_c]+\frac{1}{K}\sum\limits_{k=0}^{K-1}f_k\vec{n}_k,\label{eq:newcenter}
\end{equation}
\noindent highlighting that the circle center is shifted by the average of the force vectors $f_k\vec{n}_k$.
\end{proof}
\begin{theorem}
For a given set of forces, the circle radius is modified with the average of the force values $f_k$.
\end{theorem}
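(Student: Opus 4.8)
The plan is to mirror the structure of the proof of the preceding theorem, obtaining $\tilde{R}$ as the stationary point of the quadratic energy $C_{circle}$ in (\ref{eq:newdisk}), this time differentiating with respect to the radius rather than the center coordinates. First I would compute $\partial C_{circle}/\partial \tilde{R}$ and set it to zero. Because each summand is quadratic in $\tilde{R}$ with the coefficient $\cos^2\theta_k+\sin^2\theta_k=1$, the stationarity condition collapses to a single linear equation in $\tilde{R}$ whose leading coefficient is simply $K$, namely $K\tilde{R}=\sum_{k=0}^{K-1}[(\tilde{x}_k-\tilde{x}_c)\cos\theta_k+(\tilde{y}_k-\tilde{y}_c)\sin\theta_k]$.

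Next I would substitute the shifted contour points $\tilde{x}_k = x_c+(R+f_k)\cos\theta_k$ and $\tilde{y}_k=y_c+(R+f_k)\sin\theta_k$ from (\ref{shifted}), together with the already-established center $(\tilde{x}_c,\tilde{y}_c)$ from (\ref{eq:newcenter}), into that linear equation. After grouping, the principal contribution $\sum_k (R+f_k)(\cos^2\theta_k+\sin^2\theta_k)$ reduces immediately to $KR+\sum_k f_k$, which already contains the target expression.

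The crux is handling the cross terms generated by the shift of the center: each produces a factor $\sum_{k}\cos\theta_k$ or $\sum_k\sin\theta_k$ multiplying the averaged force components $\frac{1}{K}\sum_j f_j\cos\theta_j$ and $\frac{1}{K}\sum_j f_j\sin\theta_j$. Invoking the same trigonometric identities used in the previous theorem, namely $\sum_{k=0}^{K-1}\cos\theta_k=\sum_{k=0}^{K-1}\sin\theta_k=0$, these terms vanish identically. This is the only real subtlety I anticipate, since it is precisely what decouples the radius update from the center update and licenses applying the two theorems independently.

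Dividing the surviving equation $K\tilde{R}=KR+\sum_k f_k$ by $K$ then yields $\tilde{R}=R+\frac{1}{K}\sum_{k=0}^{K-1}f_k$, establishing that the radius is modified by exactly the average of the force values $f_k$, as claimed.
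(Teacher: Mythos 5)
Your proposal is correct and follows essentially the same route as the paper: minimize $C_{circle}$ with respect to $\tilde{R}$, substitute the shifted points and the updated center from (\ref{eq:newcenter}), and use $\sum_{k}\cos(\theta_k)=\sum_{k}\sin(\theta_k)=0$ to kill the cross terms, leaving $\tilde{R}=R+\frac{1}{K}\sum_{k}f_k$. In fact your write-up makes explicit the cancellation that the paper compresses into ``by substituting (\ref{eq:newcenter}) in (\ref{eq:forceR})''.
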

\begin{proof}\let\qed\relax
To find the new circle radius $\tilde{R}$, the energy functional ($C_{circle}$) in (\ref{eq:newdisk}) is minimized by setting its gradient to zero as:
\begin{equation}
\begin{split}
  &\frac{\partial C_{circle}}{\partial \tilde{R}}=-2\sum\limits_{k=0}^{K-1}[(\tilde{x}_k-\tilde{x}_c-\tilde{R}\cos(\theta_k))\cos(\theta_k)\\&+(\tilde{y}_k-\tilde{y}_c-\tilde{R}\sin(\theta_k))\sin(\theta_k)]=0.\label{eq:forceR}  
\end{split}
\end{equation}
By substituting (\ref{eq:newcenter}) in (\ref{eq:forceR}), one determines that
\begin{equation}
\tilde{R}=R+\frac{1}{K}\sum\limits_{k=0}^{K-1}f_k.\label{eq:newradius}
\end{equation}
\end{proof}
\subsection{Proposed Active Circle Algorithm}
In this paper we set $K=32$ and $\alpha=10^{-4}$. Note that with a larger value of $K$, the estimation accuracy is improved at the cost of increased computational complexity. Similarly, with an smaller value of $\alpha$ the accuracy is improved at the cost of increased number of iterations required to reach the convergence. The flowchart of the algorithm is shown in Fig. \ref{fig:flowchart}. As the first step, the proposed algorithm requests to manually locate the IVC. This is simply performed by a mouse click on a point inside the IVC. The initial circle centers at this selected point and to avoid convergence to a wrong boundary, its radius is assumed to be as small as 6 pixels. In the second step, we compute the forces $f_k$ using (\ref{eq:evol}). In the third step, these forces to evolve the circle parameters using (\ref{eq:newcenter}) and (\ref{eq:newradius}). Steps two and three are repeated until either a convergence or maximum number of iterations, i.e., 5000 iterations is achieved. In this paper, we assume the algorithm has converged, if the largest force computed in the second step is less than $10^{-3}$ pixels. This process is repeated for the next frames of the videos. 

\begin{figure}
    \centering
    \includegraphics{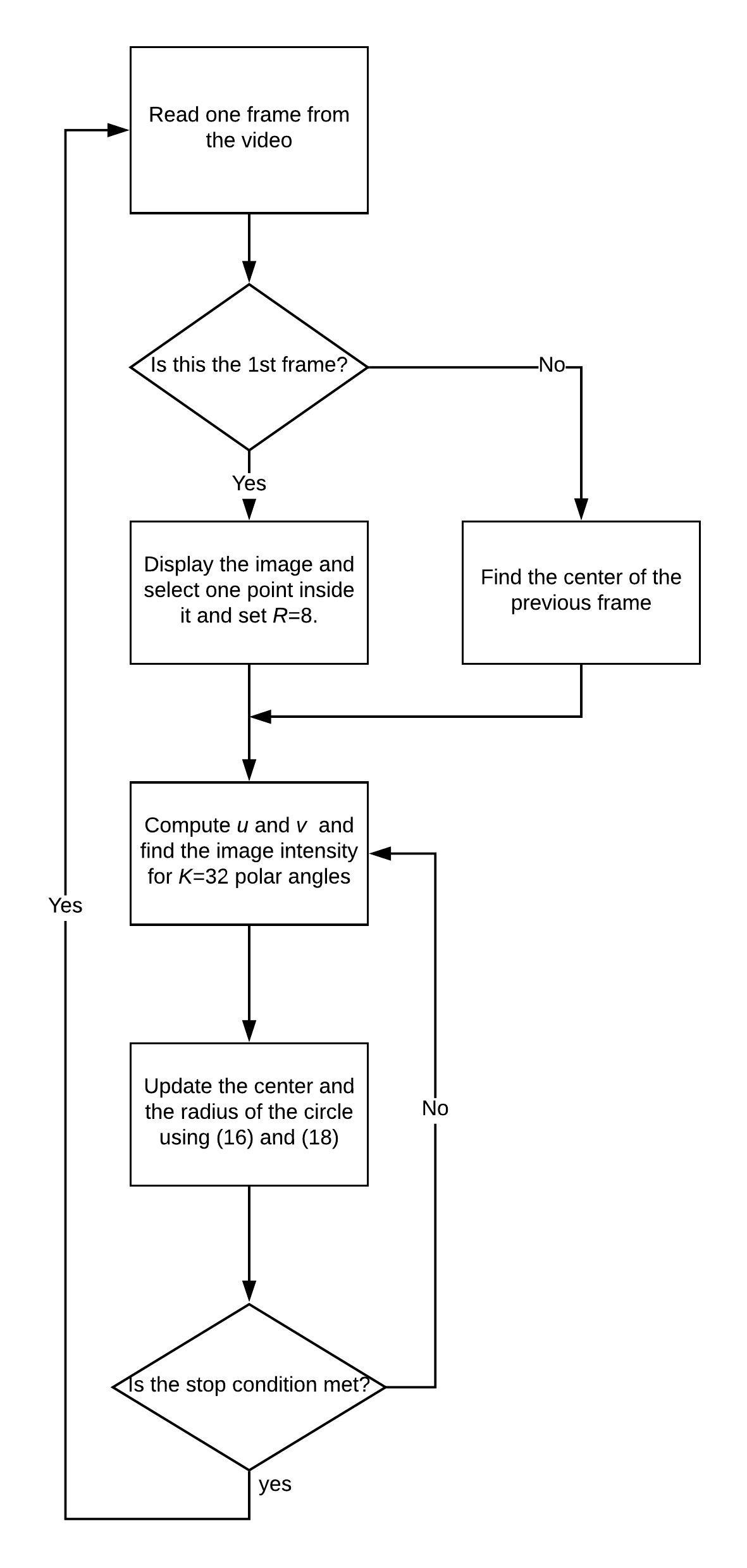}
    \caption{Flowchart of the proposed active-circle algorithm.}
    \label{fig:flowchart}
\end{figure}

\subsection{Complexity Analysis}
The complexity of the proposed active circle algorithm is obviously much less than the state-of-the-art algorithms. The algorithm obtains its low complexity from: 1) the simplicity of the proposed evolutional 2) the simplicity of the circle model which only has three parameters to estimate. In this Section, we estimate the computational complexity of the active-circle algorithm using the number of floating point operations (flops) \cite{watkins2004fundamentals}. Assume $N_{iter}$ as the average number of iterations and $A_{max}=5000$ pixels as the maximum area of the circle. From equations (\ref{eq:evol}),  (\ref{eq:newcenter}), and (\ref{eq:newradius}), one can see that the maximum number of flops for each frame is $N_{flops}=(2A_{max}+3K+5)N_{iter}\approx 50$ million flops, i.e., with an average i7-3770 Intel processor, estimation of AP-diameter from each frame requires less than 1 millisecond.

\section{Results}
The experimental data was collected from eight healthy male subjects with ages from 21 to 35. The study protocol was reviewed and approved by the Health Research Ethics Authority. The IVC was imaged in the transverse plane using a portable ultrasound (M-Turbo, Sonosite-FujiFilm) with a phased-array probe (1-5 Mhz). Each video has a frame rate of 30 fps, scan depth of 19cm, and a duration of 15 seconds (450 frames/clip). Fig. \ref{fig:3images} depicts the first frame of three subjects with different qualities which are graded, based on their quality and clinical impression, by three experts, including Dr. Andrew Smith as a point-of-care ultrasound expert and two additional clinicians. Note that although the sample image in Fig. \ref{fig:3images}-(b) seems to have a better quality than the one in Fig. \ref{fig:3images}-(a), it is rated as average quality due to its more fuzzy boundaries which degraded the accuracy of the algorithm.

\begin{figure}[t!]
\centering
\includegraphics[width=0.97\linewidth]{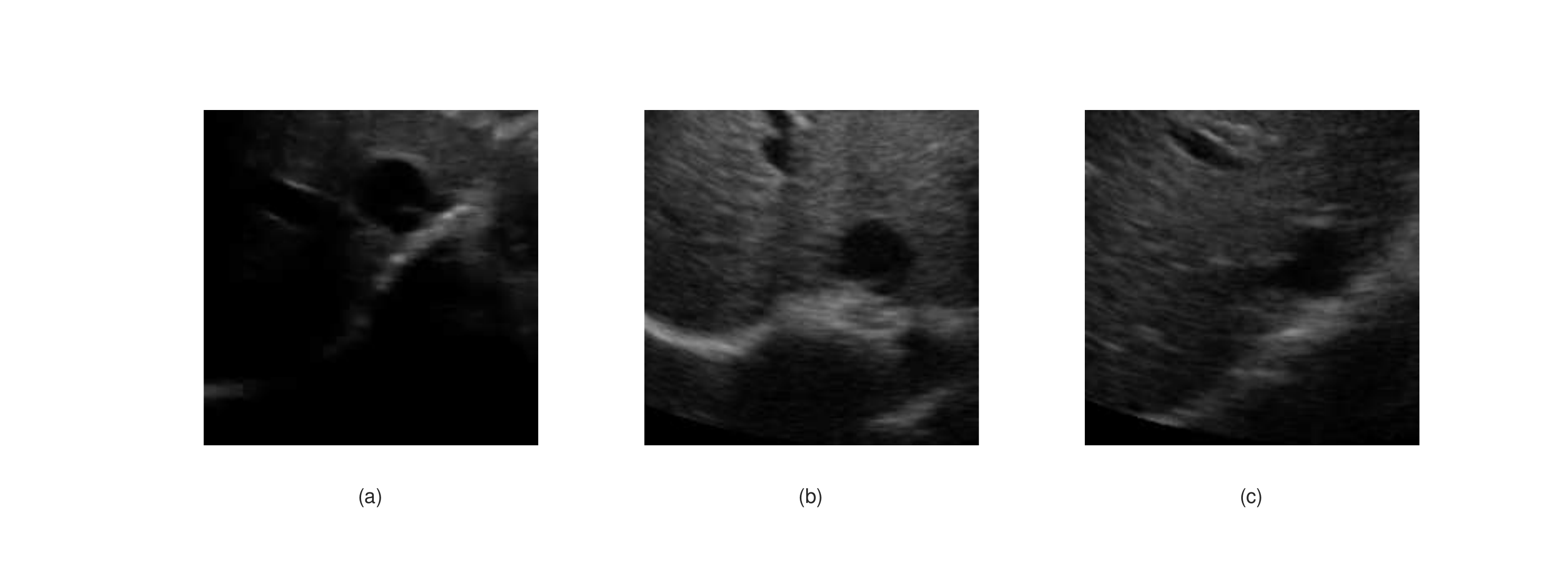}
\vspace{-3mm}
\caption{First frame of three sample IVC videos rated as (a)- good, (b)- average, and (c)- poor quality videos.}
\label{fig:3images}
\vspace{-0.3cm}
\end{figure}

\subsection{Comparison of the proposed functional with state-of-the-art functionals}
In this Section, we first compare the proposed evolution functional with manual measurements made by Dr. Andrew Smith, and two state-of-the-art evolution functionals in eqs. (\ref{evol:mean}) and (\ref{evol:var}). From Fig. \ref{fig:func12}, one can see that in all three investigated videos, the active circle algorithm using the two state-of-the-art functionals fail to track the relatively fast AP-diameter variations in the first video. This is because, in IVC images $A_u \ll A_v$, and hence in eqs. (4) and (6), the second term is dominated by the first term. Consequently, the balance between the intensities inside and outside the contour is not established.
Note that in Fig. \ref{fig:func12}-(c) which presents the results for the third subject, due to the extremely poor quality of this video, the manual measurement is partially missing for the frames in range 297 to 332, as the expert was unable to measure the AP-diameter. The proposed algorithm still estimate the AP-diameter although there is no ground truth for these frames. 
\begin{figure}[t!]
\centering
\includegraphics[width=1\linewidth]{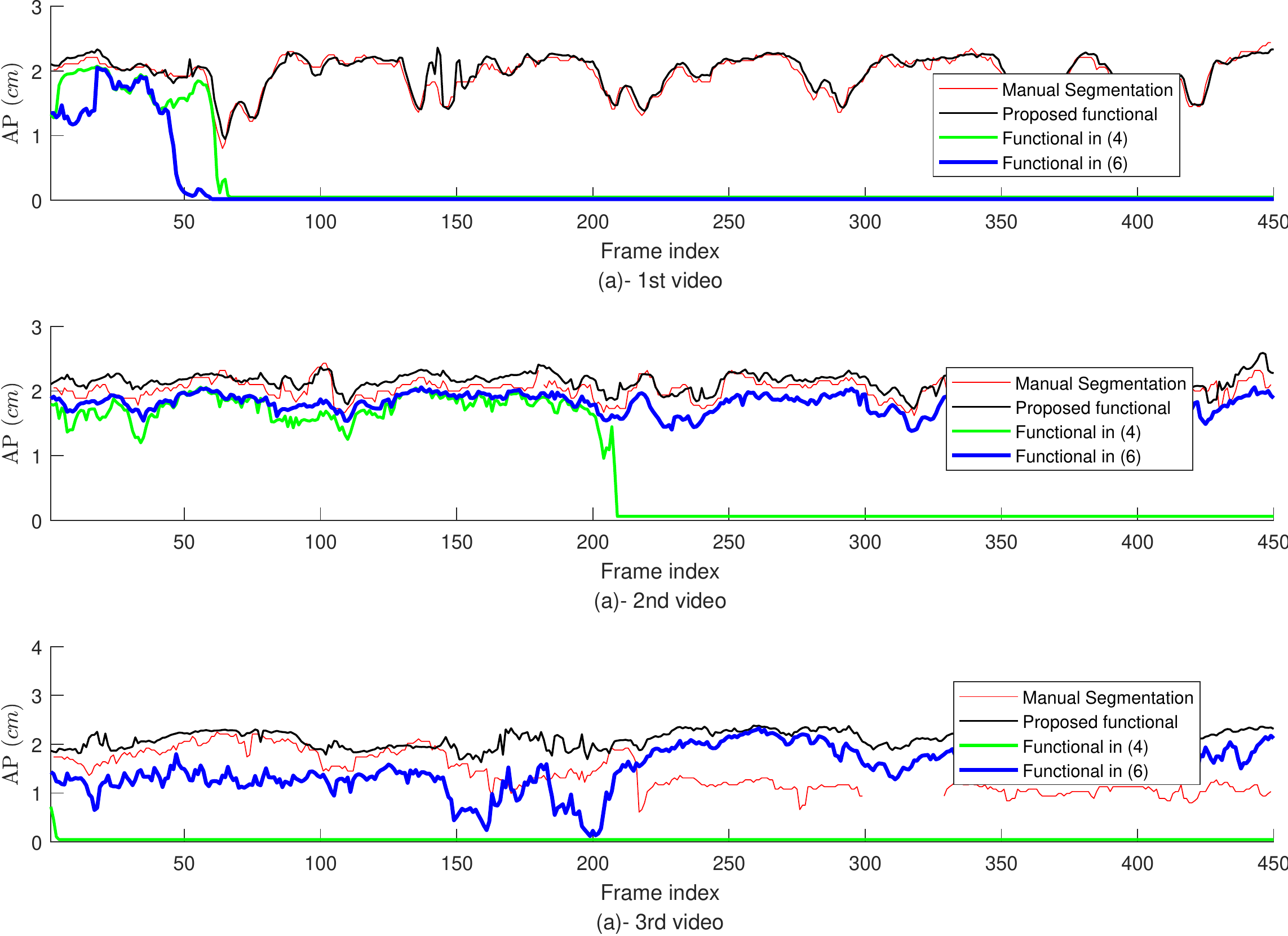}
\caption{IVC diameter of a typical IVC video as measured by the proposed functional and two functionals in equations (\ref{evol:mean}) and (\ref{evol:var}).}
\label{fig:func12}
\end{figure}
\subsection{Influence of the parameter $\alpha$ on the accuracy of the proposed algorithm}
In this section, we investigate the sensitivity of the proposed algorithm to the value of $\alpha$ for the three videos depicted in Fig. \ref{fig:3images}. For this study, we use root mean square (RMS) of error as the performance criterion, with the error $e$ defined as the the difference between the AP-diameter estimated from the proposed algorithm and the manual measurement. For the first two subjects, the RMS of error is calculated over all 450 frames, but for the third video, it is calculated over the first 150 frames, where the manual measurement seems to be reliable.  From Fig. \ref{fig:sens}, one can easily see that in all three cases, the proposed algorithm performs well with $\alpha < 5 \times 10^{-4}$. Hence, for the rest of our experiments, we set $\alpha=10^{-4}$. Note that with a smaller $\alpha$, more accurate results can be obtained in the cost of more number of iterations required for convergence. 
\begin{figure}[t!]
\centering
\includegraphics[width=0.6\linewidth]{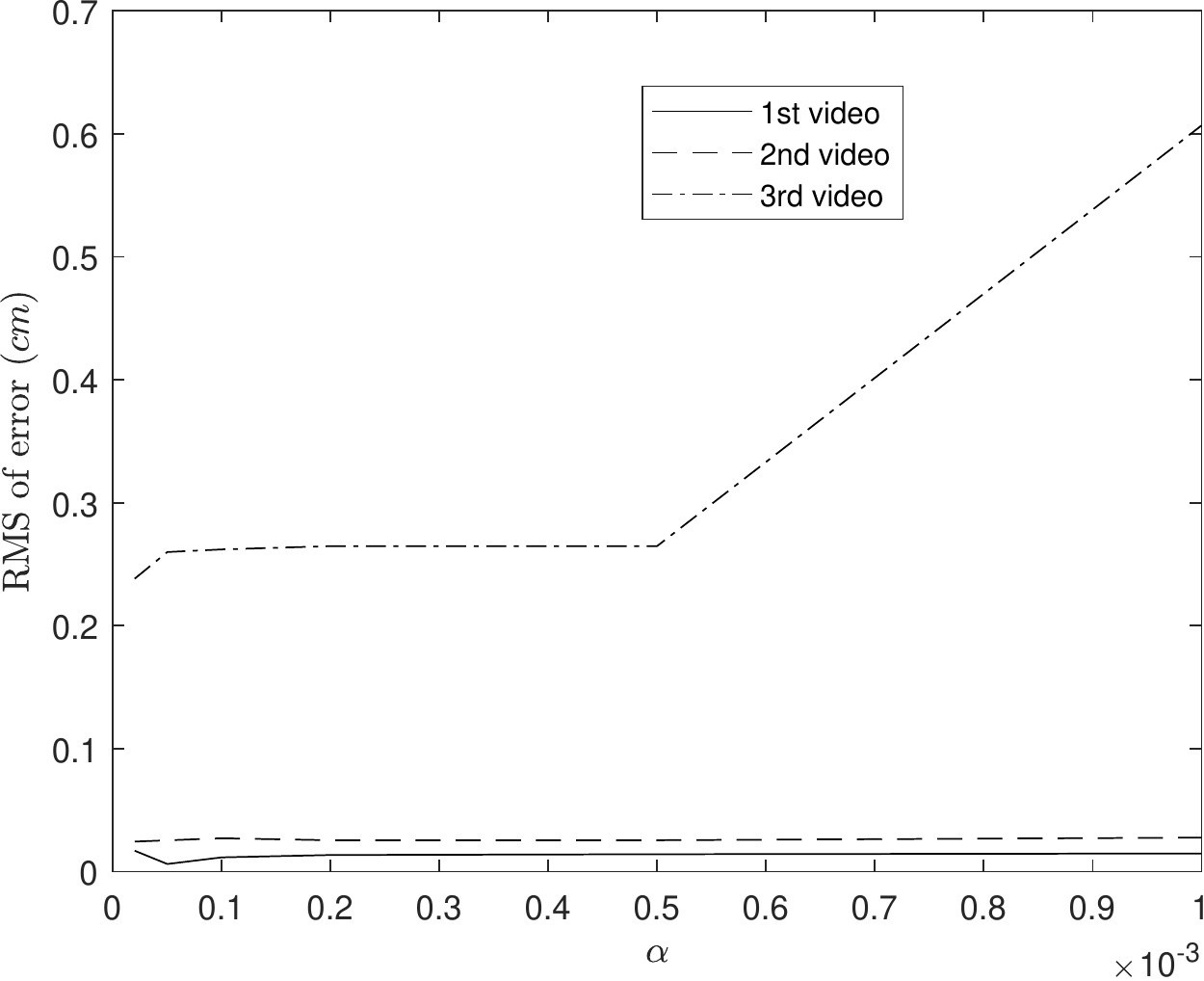}
\vspace{-3mm}
\caption{The RMS of error for the proposed algorithm w.r.t. the parameter $\alpha$ for the three sample videos depicted in Fig. \ref{fig:3images}.}
\label{fig:sens}
\vspace{-0.3cm}
\end{figure}

\subsection{comparison of the proposed algorithm with state-of-the-art segmentation algorithms}
The proposed active circle algorithm was also compared with expert manual measurement, the two classic AC algorithms - Chan-Vese \cite{chan2001} and Geodesic \cite{caselles1997}, and four state-of-the-art polar ACs- PSnake \cite{de2014} and variational polar AC \cite{baust2011}, M3 algorithm \cite{karami2017a}, and Ad-PAC \cite{karami2018}, and star-Kalman algorithm \cite{guerrero2007real}.
\\
Fig. \ref{fig:res}-(a) presents the results for the video depicted in Fig. \ref{fig:3images}-(a). This figure illustrates that only the active circle accurately tracks and segments the IVC and follows the variations in manual extraction. M3 algorithm fails to track, whereas the others struggle to capture the temporal variation.

\begin{figure}[t!]
\centering
\includegraphics[width=1\linewidth]{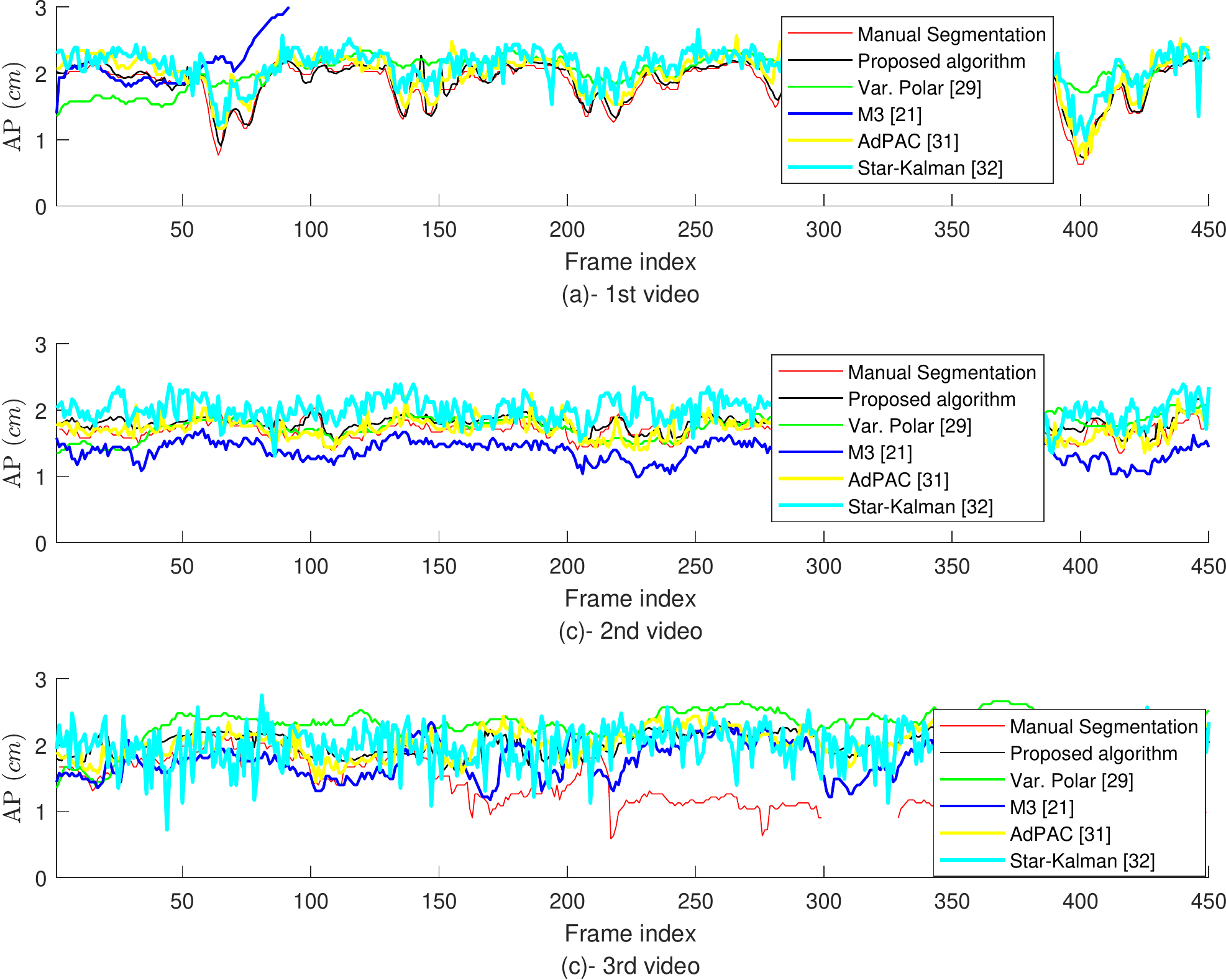}
\caption{AP-diameter for the three samples videos depicted in Fig. \ref{fig:3images}, as measured by the proposed algorithm, manual extraction, and four other algorithms.}
\label{fig:res}
\end{figure}
Fig. \ref{fig:res}-(b) details the results for the video depicted in Fig. \ref{fig:3images}-(b) which suffers from artifact partially occluding the vessel wall. Here, it is evident that the proposed algorithm again tracks the temporal variations of the manual extraction.

Fig. \ref{fig:res}-(c) details the results for the video depicted in Fig. \ref{fig:3images}-(c) which has a poor quality (the worst among all eight investigated videos). In this case, both the proposed algorithm and manual extraction roughly estimated the AP-diameter for the first 150 frames and then they failed.\\
Table 1 presents the RMS of the AP-diameter estimation error for the proposed algorithm and other seven algorithms and for all eight studied subjects, with the last column showing the results averaged over all subjects. Note that videos no. 1 to 3 are the ones depicted in Fig. \ref{fig:3images}. Except the third subject where the manual measurement is reliable only for the first 150 frames (see Fig. \ref{fig:res}-(c)), for the other seven subject, the RMS of error is calculated over all 450 frames. From this table, one can see that for all eight studies subjects, the proposed algorithm outperforms other seven methods. The second best performance is obtained from M3 algorithm. One can also see that the Star-Kalman algorithm which fits an ellipse to the IVC contour performs much poorer than the proposed active-circle algorithm. The worst average performance is obtained from the two classic AC algorithms, i.e., Chan-Vese and Geodesic AC, showing that polar ACs perform in average more accurate than the Cartesian ACs because their larger degree of freedom, i.e, the number of parameters that has to be estimated.

\begin{table}[H]
\caption{RMS of the AP-diameter estimation error obtained using the proposed algorithm and seven other methods.}
\begin{center}
 \begin{tabular}{||c|c|c|c|c|c|c|c|c|c||} 
 \hline
 \backslashbox{Method}{Subject no.} & 1 & 2 & 3 & 4  & 5 & 6 & 7 & 8 & Ave\\ [0.5ex] 
 \hline\hline
Proposed algorithm & 0.10 & 0.16 & 0.26 & 0.16 & 0.25 & 0.25 & 0.1 &  0.11 & 0.17\\ 
   \hline
  Var. Polar [29] & 0.38 & 0.25 & 0.49 & 0.27 & 0.42 & 0.28 & 0.24 & 0.52 & 0.36\\
  \hline
 Psnake [28] & 0.66 & 0.50 & 0.52 & 0.5 & 0.4 & 0.33 & 0.49 & 0.71 & 0.51\\
  \hline
 M3 [21] & 0.53 & 0.30 & 0.24 & 0.30 & 0.31 & 0.32 & 0.41 & 0.42 & 0.35 \\
 \hline
 AdPAC [31] & 0.21 & 0.17 & 0.2 & 0.17 & 0.16 & 1.1 & 1.8 & 0.13 & 0.39\\
 \hline
 Star-Kalman [32] & 0.34 & 0.45 & 0.44 & 0.45 & 0.58 & 0.54 & 0.37 & 0.25 & 0.43\\
 \hline
 Chan-Vese [26] & 1.77 & 0.39 & 0.54 & 0.39 & 0.38 & 0.33 & 0.55 & 0.73 & 0.66\\
  \hline
 Geodesic [27] & 1.89 & 0.17 & 0.54 & 0.17 & 0.38 & 0.33 & 0.55 & 0.73 & 0.60\\
 \hline
\end{tabular}
\end{center}
\end{table}
\vspace{-0.2cm}
\subsection{More Comparison Metrics}
In table 2, we compare the proposed algorithm and manual estimation using more metrics, where some of these metrics are also clinically useful to assess patient's status. In this table, $e_{ave}$ is the average estimation error, averaged over the frames in each video; $\sigma_{e}$ is the standard deviation of the estimation error $e$ for the frames in each video, and $|e|_{max}$ is the maximum error. Note that $e_{ave}$, except for the third subject where only the manual measurements for the first 150 frames are reliable, is calculated over all 450 frames. Since the values of $D_{max}$ and $D_{min}$ are useful for the medical purposes, we have also compared them obtained from both the algorithm and manual measurement. Note that subjects no. 1 to 3 are the ones depicted in Fig. \ref{fig:3images}. Furthermore, note that for subject no. 3, i.e., the lowest quality video, these metrics are only calculated for the first 150 frames, where the manual extraction is relatively reliable, while for other seven subjects, they are computed over all 450 frames. From table 2, one can see that except for the fourth subject, in other cases, $D_{max}$ estimated from the proposed algorithm is very close to the one obtained from the manual measurement and for 5 out of the eight studied subjects, the error for estimation of $D_{max}$ is less than 2mm. Furthermore, in all cases the average error is positive indicating that the proposed algorithm usually overestimates the AP-diameter although this bias is very small and is mostly less than 2mm. Note that this overestimation is mainly due to the fact that the expert extracts the AP-diameter from the inside boundary of IVC. The amount of bias can simply be controlled by adding a constant value to the functional. In all cases, the value $\sigma_{e}$ is small indicating that the proposed algorithm performs consistent over different frames of each video.

\begin{table}[H]
\caption{Summary of the results of eight studied IVC videos.}
\begin{center}
 \begin{tabular}{||c|c|c|c|c|c|c|c|c||} 
 \hline
 \backslashbox{Metric}{Subject no.} & 1 & 2 & 3 & 4 & 5 & 6 & 7 & 8\\ [0.5ex] 
 \hline\hline
 $e_{ave}$ (cm) & 0.03 & 0.07 & 0.07 & 0.18 & 0.29 & 0.37 & 0.04 & 0.08\\ 
   \hline
$\sigma_{e}$ (cm) & 0.06 & 0.07 & 0.14 & 0.08 & 0.08 & 0.13 & 0.09 & 0.11 \\
  \hline
 $|e|_{max}$ (cm) & 0.06 & 0.29 & 0.43 & 0.48 & 0.57 & 0.75 & 0.41 & 0.44 \\
  \hline
 $D_{max}$ alg. (cm) & 2.39 & 2.59 & 2.3 & 1.93 & 2.35 & 2.29 & 2.14 & 2.33 \\
 \hline
 $D_{min}$ alg. (cm) & 0.67 & 0.75 & 1.7 & 1.37 & 1.69 & 1.38 & 1.54 & 1.16 \\
 \hline
 $D_{max}$ man. (cm) & 2.43 & 2.43 & 2.25 & 2.41 & 2.36 & 2.08 & 2.21 & 2.39 \\
 \hline
 $D_{min}$ man. (cm) & 0.7 & 0.66 & 1.36 & 0.94 & 1.34 & 1.22 & 1.42 & 1.21\\

 \hline
\end{tabular}
\end{center}
\end{table}
\vspace{-0.2cm}

\subsection{Influence of speckle removal filtering on the performance of the proposed algorithm}
In this subsection, we investigate the influence of different speckle-removal filters on the performance of the proposed algorithm. For this study, we used the proposed algorithm for the images smoothed by either of the the following filters: Bilateral filter \cite{tomasi1998}, 3D block matching filter (BM3D) \cite{dabov2007}, speckle reducing anisotropic diffusion (SRAD) \cite{Yu2002}, Wiener filter \cite{loizou2008}, and median filter.\\
Fig. \ref{fig:filter} shows the AP-diameter estimated using the proposed algorithm with each of the  above speckle removal filters and have shown their results for the subjects depicted in Fig. \ref{fig:3images}. From Fig. \ref{fig:filter}, one can see that none of these filtering techniques significantly improve the performance of the proposed algorithm. This is mainly due to the fact that the speckle noise in ultrasound images includes useful information that can even improve the performance of computerized algorithms. On the other hand, as it was discussed in Section III, the proposed algorithm relies on the mean value of the local distributions which usually is not significantly changed by speckle removal filters. \\
Table 3 presents numerical results to validate this argument. One can see that except bilateral filter, other speckle-removal filters even degrade the performance and the improvement obtained from bilateral filter is negligible. 
\begin{figure}
    \centering
    \includegraphics[width=1\linewidth]{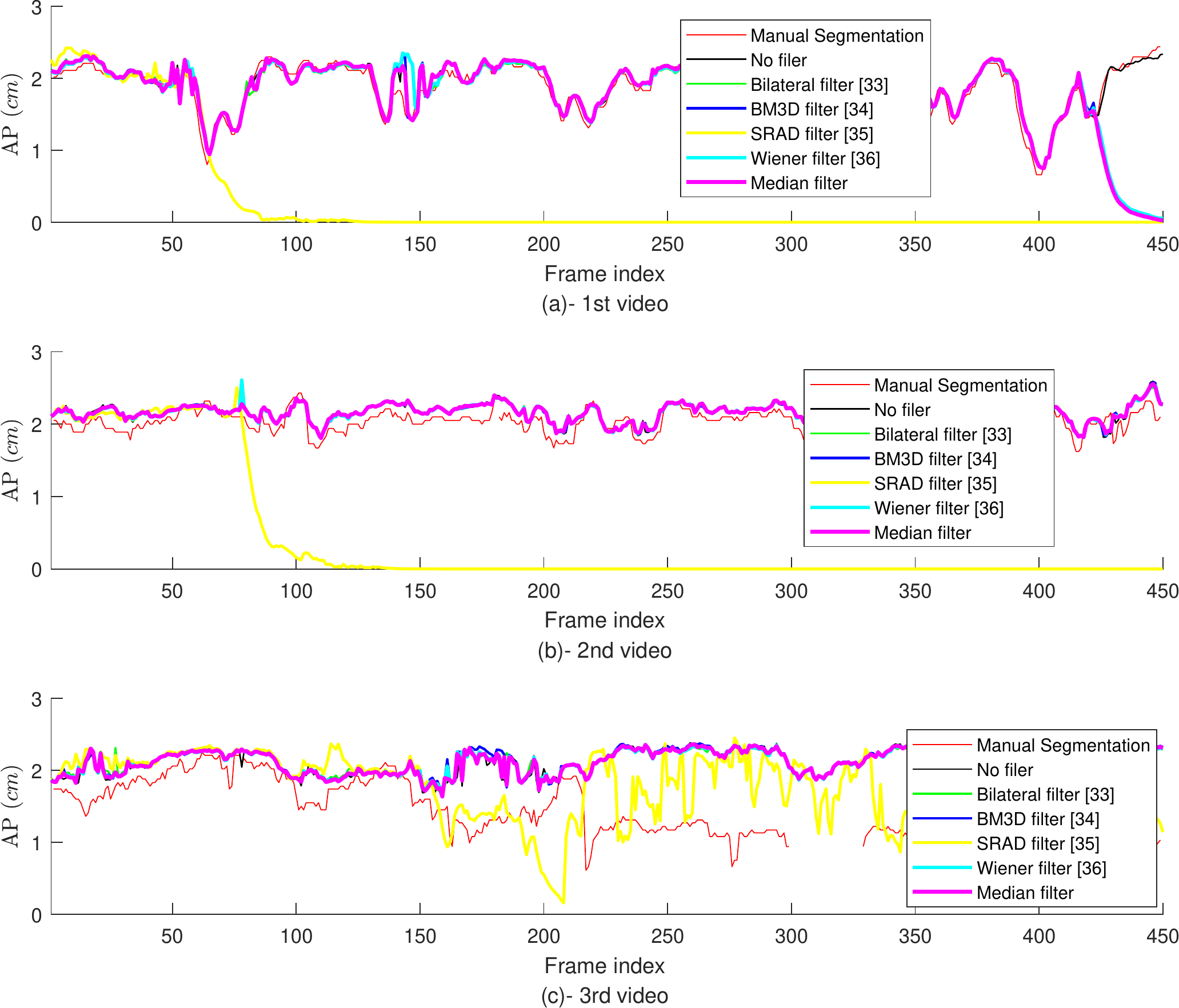}
    \caption{Estimated AP-diameter versus the type of speckle removal filter.}
    \label{fig:filter}
\end{figure}

\begin{table}[H]
\caption{RMS of error obtain from the proposed algorithm with different speckle removal filters.}
\begin{center}
 \begin{tabular}{||c|c|c|c|c|c|c|c|c||} 
 \hline
 \backslashbox{Metric}{Subject no.} 
                     & 1    & 2    & 3    & 4    & 5    & 6    & 7    & 8 \\ [0.5ex] 
 \hline\hline
 No. filter          & 0.10 & 0.16 & 0.26 & 0.33 & 0.25 & 0.25 & 0.10 & 0.12\\ 
   \hline
Bilteral filter [33] & 0.11 & 0.16 & 0.28 & 0.32 & 0.24 & 0.24 & 0.10 & 0.11 \\
  \hline
BM3D [34]            & 0.48 & 0.16 & 0.27 & 0.41 & 0.68 & 0.26 & 0.10 & 0.11 \\
  \hline
SRAD [35]            & 1.78 & 1.81 & 0.34 & 0.50 & 0.78 & 0.73 & 0.10 & 0.11 \\
 \hline
Wiener filter [36]   & 0.48 & 0.17 & 0.26 & 0.34 & 0.43 & 0.25 & 0.10 & 0.12 \\
 \hline
Median filter        & 0.48 & 0.17 & 0.26 & 0.36 & 0.53 & 0.25 & 0.10 & 0.12 \\
\hline
\end{tabular}
\end{center}
\end{table}
\vspace{-0.2cm}
\section{Conclusion}
In this paper, a novel active circle algorithm is developed for estimating the AP-diameter of the IVC in ultrasound imagery. It has been shown that the diameter of a circle fitted inside the IVC can accurately model the AP-diameter and therefore, is a useful tool capable of supporting further clinical research of using the IVC to guide fluid resuscitation. Furthermore, a novel evolution functional has been proposed and used for updating the circle parameters. Experimental results suggest that the proposed algorithm performs very close to the expert manual measurements. The proposed algorithm only failed under extremely low image-quality scenarios when the AP-diameter was even unable to be measured by the expert.
\biboptions{numbers,sort&compress}
\bibliographystyle{elsarticle-num}

\end{document}